\newcommand{\xmath}[1] {\ensuremath{#1}\xspace}
\newcommand{\blmath}[1] {\xmath{\bm{#1}}}
\newcommand{\xb}{{\blmath x}}
\newcommand{\Lc}{\mathcal{L}}
\newcommand{\Kd}{\mathbb{K}}
\newcommand{\beq}{\begin{equation}}
\newcommand{\eeq}{\end{equation}}
\newcommand{\beqa}{\begin{eqnarray}}
\newcommand{\eeqa}{\end{eqnarray}}
\newtheorem{theorem}{Theorem}%[section]
\newtheorem{proposition}[theorem]{Proposition}
\begin{document}

\title{Unsupervised MR Motion Artifact Deep Learning using Outlier-Rejecting Bootstrap Aggregation}
\date{\vspace{-4ex}}

\author{Gyutaek Oh, 
		Jeong Eun Lee, 
		and~Jong~Chul~Ye,~\IEEEmembership{Fellow,~IEEE}% <-this % stops a space
\thanks{G. Oh, and J. C. Ye are with the Department of Bio and Brain Engineering, 
		Korea Advanced Institute of Science and Technology (KAIST), 
		Daejeon 34141, Republic of Korea (e-mail: \{okt0711, jong.ye\}@kaist.ac.kr). 
		J.E. Lee is with the Department of Radiology, Chungnam National University Hospital, Chungnam National University College of Medicine, 282 Munhwa-ro, Jung-gu, 
		Daejeon 35015, Republic of Korea (e-mail: leeje290@gmail.com).
		J.C. Ye is also with the Department of Mathematical Sciences, KAIST.}
\thanks{J.E. Lee and J.C. Ye are corresponding authors.} 
}

% make the title area
\maketitle

% As a general rule, do not put math, special symbols or citations in the abstract or keywords.
\begin{abstract}
Recently, deep learning approaches for MR motion artifact correction have been extensively studied.
Although these approaches have shown high performance and reduced computational complexity compared to classical methods, most of them require supervised training using paired artifact-free and artifact-corrupted images, which may prohibit its use in many important clinical applications.
For example, transient severe motion (TSM) due to acute transient dyspnea in Gd-EOB-DTPA-enhanced MR is difficult to control and model for paired data generation.
 % for supervised  training.
%However, intravenous injection of gadoxetate disodium is associated with an unfavorable phenomenon called acute transient dyspnea and transient severe motion (TSM), which are most prominent in the arterial phase imaging [1, 2].
%
%Because the appearance of real motion artifact could be different with simulated motion artifact, the importance of algorithms for real motion artifact correction has increased.
To address this issue, here we propose a novel unsupervised deep learning scheme through outlier-rejecting bootstrap subsampling and aggregation.
This is inspired by the observation that motions usually cause sparse $k$-space outliers in the phase encoding direction, so $k$-space subsampling along the phase encoding direction can remove some outliers and the aggregation step can further improve the results from the reconstruction network.
%Specifically, our network is trained for the reconstruction of downsampled images which are generated from images without motion artifact.
%On the other hand, motion artifact images are used instead of clean images in testing phase.
%Because the subsampling along the phase-encoding direction removes some samples with the phase error, our network can recover clean images from downsampled motion artifact images.
Our method does not require any paired data because the training step only requires artifact-free images.
Furthermore, to address the smoothing from potential bias to the artifact-free images, the network is trained in an unsupervised manner using
optimal transport driven cycleGAN.
We verify that our method can be applied for artifact correction from simulated motion as well as  real motion from TSM successfully, outperforming existing state-of-the-art deep learning methods.
\end{abstract}

% Note that keywords are not normally used for peerreview papers.
\begin{IEEEkeywords}
Motion artifact, deep learning, self-supervised learning, outlier rejection, MRI
\end{IEEEkeywords}

\IEEEpeerreviewmaketitle

\section{Introduction}\label{sec:introduction}
\IEEEPARstart{M}{agnetic} resonance imaging (MRI) is a non-invasive imaging method which provides various contrast enhanced images without radiation exposure.
Although MRI has several advantages, the scan time of MRI is relatively long, % the motion of patients can appear during the acquisition.
so that motion artifacts from the patient's motion are often unavoidable.
In fact, motion artifact is considered as one of the main problems of MRI acquisition.
%Due to the long scan time of MRI, the motion of patients can appear during the acquisition.
%This motion occurs the phase error in $k$-space lines, and it produces the motion artifact in MR images.

Many approaches have been investigated for MR motion artifact correction.
For example, additional devices are used to estimate motion, of which information is used for artifact correction \cite{ehman1984magnetic,todd2015prospective}.
Various sampling trajectories \cite{block2007undersampled,white2009motion,cruz2016accelerated,feng2016xd} and imaging sequences \cite{vasanawala2010navigated,chavhan2013abdominal,cruz2017highly} also have been studied to mitigate motion artifacts.
However, the requirement of additional hardware and scan time are the shortcomings of the above methods.
Moreover, many intrinsic motions, such as heartbeat and transient dyspnea, cannot be accurately measured with external devices.

One of the particular interests in this paper is the motion artifact from transient severe motion (TSM) in gadoxetic acid (Gd-EOB-DTPA)-enhanced MR.
More specifically, Gd-EOB-DTPA is a hepatocyte-specific MR contrast agent for the imaging diagnosis of hepatocellular carcinoma (HCC), liver metastases, and other diseases \cite{verloh2015liver}.
Its advantage lies in the offering hepatobiliary phase (HBP) imaging \cite{kubota2012correlation}.
Despite the advantages of HBP imaging in Gd-EOB-DTPA-enhanced MR, 
%arterial phase enhancement remains critical for the detection and characterization of FLLs (14). In previous study, 
it has been reported that TSM due to acute transient dyspnea often occurred after the administration of Gd-EOB-DTPA \cite{davenport2013comparison}. 
Since TSM may have a serious effect on image quality during the arterial phase and affect the diagnostic accuracy of liver disease, a proper motion artifact correction algorithm is required for accurate detection and characterization of focal liver lesions \cite{ahn2010added}.  %Unfortunately, the external device cannot detect transient dyspnea for motio

Compressed sensing (CS) \cite{donoho2006compressed} based algorithms have been explored to solve the problem of MRI motion artifacts \cite{vasanawala2010improved,yang2013sparse,jin2017mri,kustner2017self}.
However, high computational complexity and difficulty in hyper-parameter tuning are limitations of these CS algorithms.
Furthermore, CS algorithms require raw $k$-space data, which is often difficult to acquire in practice.

Recently, deep learning approaches for MRI reconstruction have been extensively studied  \cite{hammernik2018learning,han2019k,wang2019accelerated,cha2020geometric}.
Deep learning methods for MRI motion artifact reduction \cite{duffy2018retrospective,pawar2018motion,zhang2019multi,liu2020motion,tamada2020motion} have also been proposed.
However, most of existing deep learning approaches for motion artifact correction are based on the simulated motion artifact data, which makes it 
%Because the real motion artifacts differ from the simulated motion, it is 
difficult to apply them to the real MR situations.
Instead, other deep learning approaches exploited real motion artifact data obtained in controlled experiments \cite{armanious2019retrospective,kustner2019retrospective,armanious2020medgan}.
Although these approaches can reduce MR motion artifacts from similar real motions, it is difficult to obtain matched clean and artifact images in many important real-world applications: for example, in the arterial image degradation due to TSM in Gd-EOB-DTPA-enhanced MR, the motion-free paired images cannot be obtained.
%Therefore, the necessity of deep learning methods for real MRI motion artifact has been increasing.

To overcome the lack of paired data, Armanious et al proposed Cycle-MedGAN  \cite{armanious2019unsupervised}
and  Cycle-MedGAN V2.0 \cite{armanious2020unsupervised}.
%which is based on  
These algorithms are trained on unpaired data set using cycleGAN \cite{zhu2017unpaired}.
Since these algorithms interpret the motion artifact removal as a style transfer problem,  there exists no explicit
motion artifact rejection mechanism so that  its applications to the real data set exhibit limited performance as will be shown later.
%, and utilizes style loss and perceptual loss as cycle consistency loss.
%%Also, in the most recent work, they proposed Cycle-MedGAN V2.0 \cite{armanious2020unsupervised} to improve the performance of Cycle-MedGAN.
%Although data sets that are used in \cite{armanious2019unsupervised} and \cite{armanious2020unsupervised} are unpaired and real motion artifact data, these motion artifact data are not realistic because the data were acquired with controlled motions, which is not common in the real situation.

To address these issues,  here we propose a novel deep learning method for MR motion artifact correction that does not require any matched motion and motion-free images, but still offers high quality motion artifact correction by employing explicit motion artifact rejection mechanism.
Our method is based on a key observation that motion artifacts usually result in sparse $k$-space outliers in the phase encoding direction \cite{jin2017mri}.
%Accordingly,  if the corresponding $k$-space outliers are identified
%and replaced by the properly interpolated $k$-space data, then the motion artifact can be removed \cite{jin2017mri}, and 
In fact, one of the main ideas of our prior motion artifact removal algorithm  \cite{jin2017mri} is to use the sparse and low-rank Hankel matrix decomposition of $k$-space data to identify the outliers as a sparse component and replace them using a low-rank Hankel matrix completion.
Unfortunately,  the computation time is quite high for practical uses, and the outlier detection is quite sensitive to the choice of hyper-parameters.
As such,  for the motion artifact removal from TSM, the algorithm in \cite{jin2017mri} never succeeded.

Rather than relying on the difficult sparse outliers identification as done in \cite{jin2017mri}, the main idea of this work comes from $k$-space random subsampling, which can eliminate sparse outliers in $k$-space from motions in a probabilistic sense.
In fact, this is ideally fit to the deep learning framework thanks to its close relationship with bootstrap aggregation \cite{breiman1996bagging}, which has been recently explored in deep learning MR reconstruction \cite{cha2020geometric}.
In particular, if a neural network has been trained using motion-free training data as a reconstruction network from undersampled $k$-space data, the bootstrap subsampling in the $k$-space domain can eliminate some of the sparse $k$-space outliers from the motion, so that corresponding neural network creates images with fewer motion artifacts.
Then, during the aggregation step, the loss of image quality from the subsample data can be restored.
To deal with the potential bias from the use of the clean data for training, we use the unpaired training strategy using cycleGAN
for the reconstruction of high quality images from undersampled $k$-space data.
Since the neural network is trained using only clean data without motion artifact simulation and acquisition, our method is so flexible that can be applied to correct various motion artifact from both of simulated data and real data.
%Furthermore, we do not need any $k$-space data, since the image
%domain data with motion artifacts can convey the same sparse outlier structure in Fourier domain as shown in our prior work \cite{?|}.
%Our experiment results show that our proposed method significantly reduce the motion artifact in both of simulated data and real data.
In particular, we successfully demonstrate that the intricate motions artifacts from TSM in Gd-EOB-DTPA-enhanced MR can be effectively and robustly removed using the proposed method as will be shown in experiments.

The remaining parts of our paper are organized as follows.
Section \ref{sec:related works} reviews previous deep learning methods for MRI motion artifact correction.
The main theory for the proposed method is then described in Section \ref{sec:theory}.
Next, Section \ref{sec:method} explains the experimental data sets, network architecture, and training details.
In Section \ref{sec:result}, our experimental results are shown.
Section \ref{sec:discussion} and Section \ref{sec:conclusion} contain the discussion and conclusion about our results.

\section{Related Works}\label{sec:related works}
%\subsection{Deep Learning using Simulated Motion Data}
Early deep learning approaches for MRI motion artifact correction are based on supervised learning.
For example, Duffy et al \cite{duffy2018retrospective} proposed convolutional neural networks (CNNs) for brain MR motion artifact correction using the adversarial loss, where dilated convolutions and skip connections are employed for the generator.
Pawar et al \cite{pawar2018motion} converted motion artifact problems to pixel classification problems, where the last layer classifies the pixels as one of 256 classes.
Zhang et al \cite{zhang2019multi} proposed a multi-scale network with residual blocks, and Liu et al \cite{liu2020motion} proposed densely connected multi-resolution blocks for MRI motion artifact reduction.
Also, Tamada et al proposed motion artifact reduction method based on CNN (MARC) inspired by Gaussian denoising with CNNs \cite{zhang2017beyond}.

Although aforementioned works showed improved performance, most of them used numerical simulations to generate motion artifact data.
The usage of simulated motion artifact data for network training is a limitation because the real motion artifact can differ from the simulated ones.
%Therefore, algorithms without using simulation data are required in the real situation.

%\subsection{Deep Learning using Real Data}
Some deep learning algorithms used real motion artifact data.
Armanious et al \cite{armanious2020medgan} suggested MedGAN for medical image translation, and applied it to
MRI motion artifact correction \cite{armanious2019retrospective,armanious2020medgan}.
MedGAN is based on pix2pix \cite{isola2017image},
so 
% and use additional losses such as style loss, content loss \cite{gatys2016image}, and perceptual loss \cite{johnson2016perceptual}.
%Since MedGAN is a method for image-to-image translation, it is also applied for 
%However, MedGAN
that it requires paired clean and motion artifact images, which is hard to obtain in practice.
To overcome the lack of paired data, Armanious et al \cite{armanious2019unsupervised} proposed Cycle-MedGAN, which is based on cycleGAN \cite{zhu2017unpaired}.
Cycle-MedGAN is trained on unpaired data set, and utilizes style loss and perceptual loss as cycle consistency loss.
Also, in the most recent work, they proposed Cycle-MedGAN V2.0 \cite{armanious2020unsupervised} to improve the performance of Cycle-MedGAN.
Although data sets that are used in \cite{armanious2019unsupervised} and \cite{armanious2020unsupervised} are unpaired and real motion artifact data, these motion artifact data are not realistic because the data were acquired with controlled motions, which is not common in the real situation.

%Therefore, for practical use in real situation,
%a  desirable deep neural network should use the real motion artifact data which is not acquired artificially
%without requiring any paired clean images.

\section{Theory}\label{sec:theory}
In this section, we first explain that the motion artifacts are often realized as $k$-space outliers, and then propose a novel deep learning scheme using the bootstrap subsampling and aggregation scheme.
    
\subsection{Motion Artifact as Sparse $k$-space Outliers}
%MR raw data measurements are acquired by sampling echo signals corresponding to the k-space trajectories.
According to the existing works related to motion artifacts \cite{bernstein2004handbook,hedley1991motion}, when there exist subject's transient motions, they incur displacements along the phase encoding direction at a few specific time instances, which result in the following $k$-space data:
\begin{align}
\widehat \xb_e(k_x,k_y) = 
\begin{cases} \widehat \xb(k_x,k_y)e^{-j \Phi(k_y)} & k_y \in \Kd \\
\widehat \xb(k_x,k_y)  & \mbox{otherwise,} \end{cases}
\label{eq:motion}
\end{align}
where $j=\sqrt{-1}$ and $\widehat \xb_e(k_x,k_y)$ and $\widehat \xb(k_x,k_y)$ refer to the motion-corrupted and motion-free $k$-space data, respectively, with $k_x$ and $k_y$ being the indices along the read-out and phase encoding directions, respectively.
Furthermore, $\Phi(k_y)$ is the displacement (in radian) at the phase encoding index $k_y$, and $\Kd$ denotes the phase encoding indices where the displacements occur.
Eq. \eqref{eq:motion} implies that motion artifacts cause phase variations, which appear as $k$-space sparse outliers along phase encoding direction.

%Sparse outliers in $k$-space plane or volume are depicted in Fig. \ref{fig:sparse_outlier}.
More specifically, when MR images are acquired by 2D imaging, sparse outliers appear along the phase encoding direction as shown in Fig. \ref{fig:sparse_outlier}(a).
%In this case, sparse outliers can be deleted by 1D subsampling in 2D $k$-space plane.
On the other hand, in 3D imaging, two-dimensional phase encoding steps are required and sparse outliers appear in the 3D volume of the $k$-space, as shown in Fig. \ref{fig:sparse_outlier}(b).
To address these sparse outliers in 3D imaging, 2D images along the sagittal direction should be processed.
%However, the number of slices are usually much smaller than the phase encoding 2 direction in Fig. \ref{fig:sparse_outlier}(b),
%which makes the image size too biased. Furthermore, clinical evaluations are usually performed using axial images,
%so that the motion artifact correction in the axial images has more clear benefit.
%%
%%2D subsampling in 3D $k$-space volume is required.
%%However, it is hard to treat 3D data in practice due to the necessity of a large amount of memory.
%
%Accordingly, instead of handling 3D data, it is also possible to use cross sections of the volume.
%If the sagittal cross section is employed, sparse outliers can be removed by 2D subsampling in $k$-space of 2D sagittal cross section.
However, it is also difficult to train the network by using the sagittal cross section images because the size and content of images are inconsistent depending on patients.
Moreover, the dimension of the phase encoding 1 direction in Fig. \ref{fig:sparse_outlier}(b) is usually significantly smaller than the phase encoding direction 2.
On the other hand, all of axial cross section images have the same size and consistent content.
Also, if 1D Fourier transform is applied to the 3D $k$-space volume along phase encoding direction 1, the sparse outliers in 3D volume can be converted into accumulated sparse oultiers in the $k$-space of 2D axial cross section, as shown in Fig. \ref{fig:sparse_outlier}(b).
Therefore, the sparse outliers can be considered as 1D outliers in 2D $k$-space along the phase encoding direction.
For this reason, our method can be designed to remove phase encoding directional outliers in the 2D $k$-space plane for both 2D and 3D imaging.

\begin{figure}[!h]
	\centerline{\includegraphics[width=0.8\linewidth]{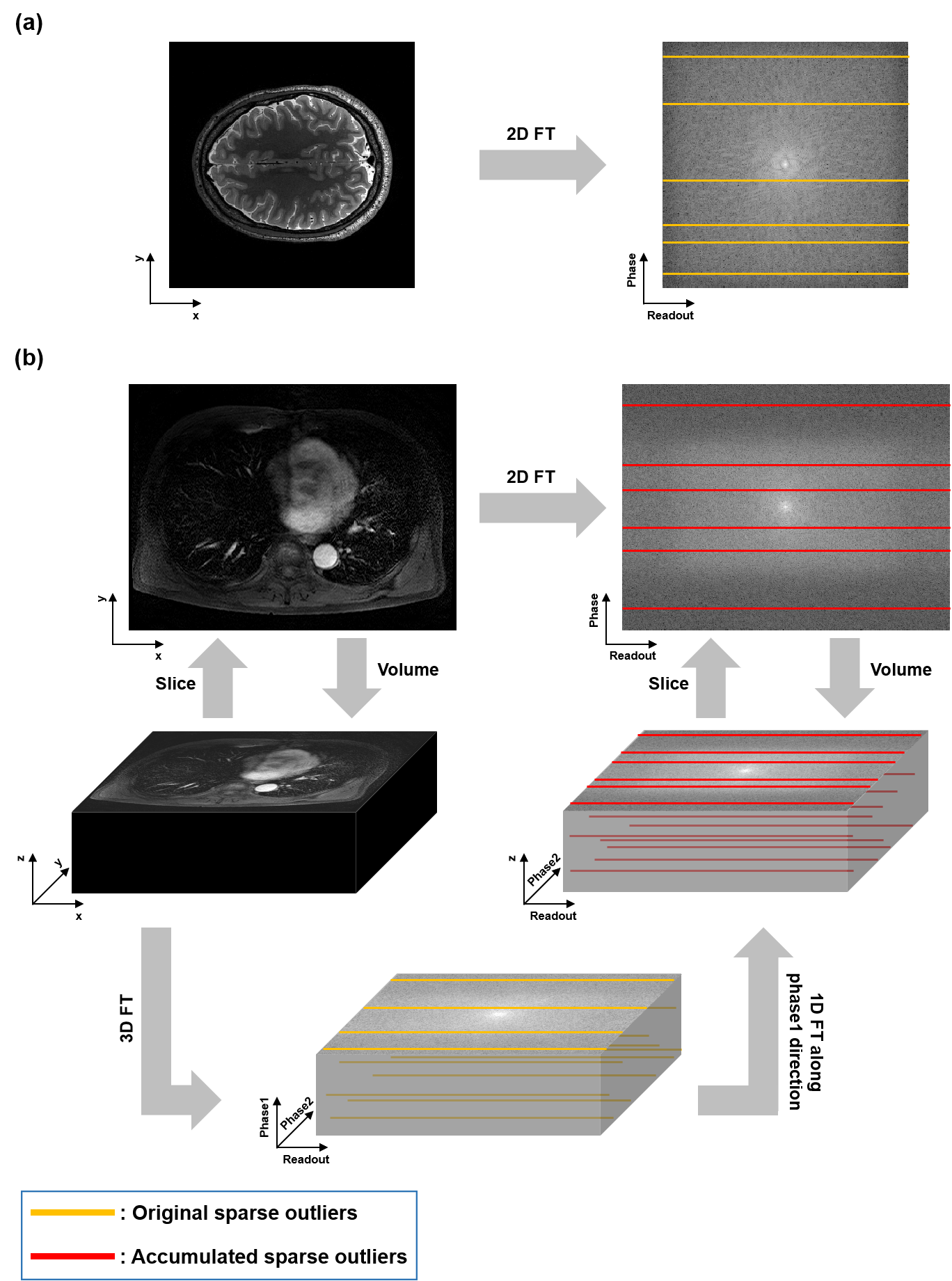}}
	\caption{Sparse outliers in $k$-space along the phase encoding direction:
	(a) 2D imaging, (b) 3D imaging}
	\vspace{-0.5cm}
	\label{fig:sparse_outlier}
\end{figure}

\subsection{Bootstrap Aggregation for Motion Artifact Correction}
Bootstrap aggregation is a classic machine learning technique which uses the bootstrap sampling and aggregation of the results to improve the accuracy of the base learner \cite{breiman1996bagging}.
The rationale for bootstrap aggregation is that it may be easier to train multiple simple weak learners and combine them into a more complex learner than to learn a single strong learner.
%
%Another important complication in calculating \eqref{eq:N2B} is that the mean aggregation
%$\widehat\mub(\xb)=E_L\{\Fb(\xb,L)\}$ is not available and we only have its empirical estimate.
%Although the simplest way to obtain an empirical estimate is to average the overall results of the E-D CNN, this may not be the best method because it does not reflect the data distribution of the results. Therefore, we propose a data attention network that efficiently combines all data so that
%it  can adaptively incorporate neural network output from  various random sampling patterns.
%Although the loss in \eqref{eq:N2B} is derived using $E_L\{\Fb(\x,L)\}$,  the actual
%mean value is unknown and we can only calculate the empirical mean:
 
\begin{figure*}[!ht]
	\centerline{\includegraphics[width=0.7\linewidth]{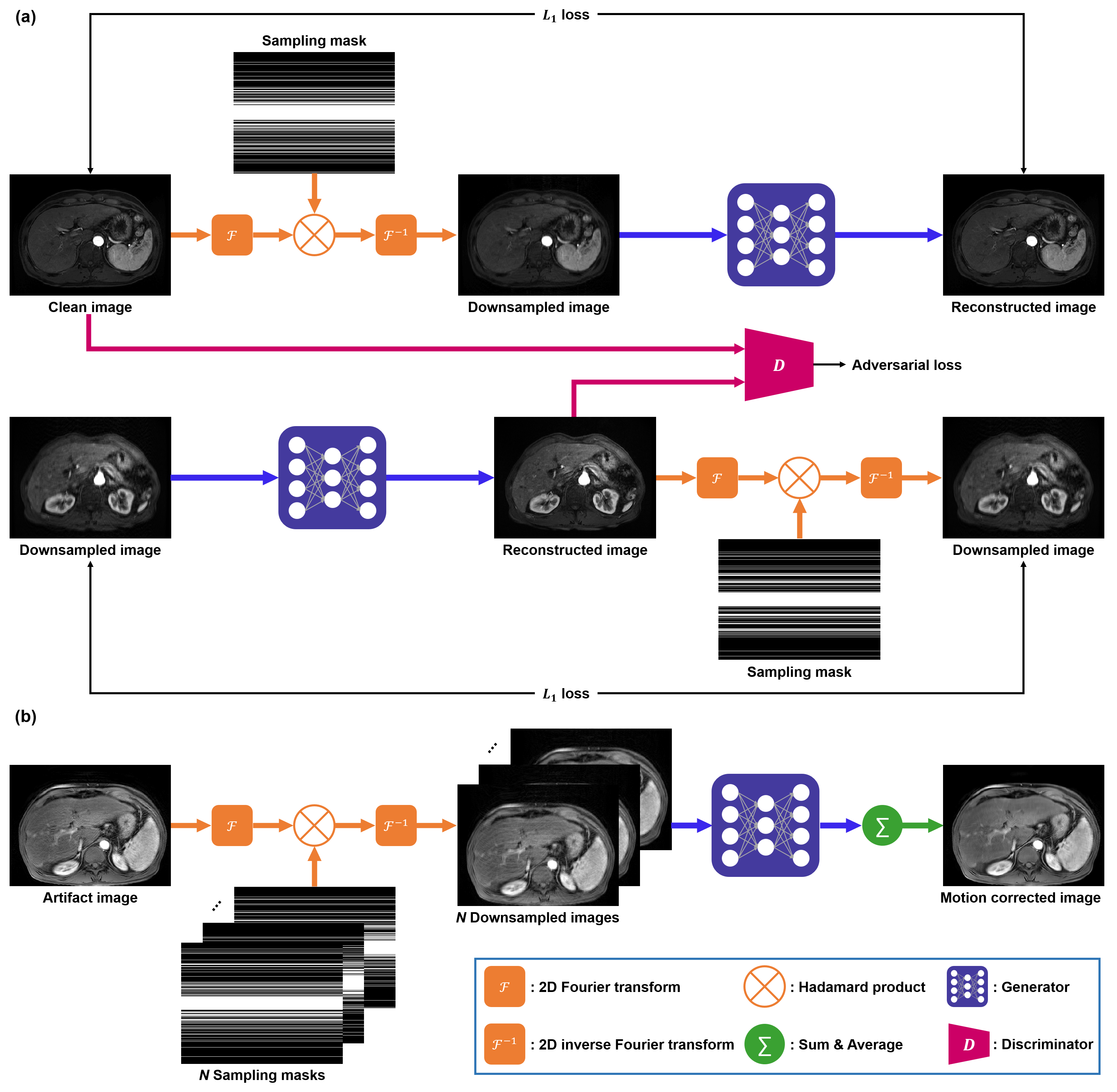}}
	\caption{Overall flow of the proposed method: 
		(a) In training phase, a clean image is downsampled in the $k$-space domain
		and the network is trained to convert the aliased image to the original fully sampled image.
		(b) In testing phase, $N$-aliased images are generated from one artifact image by random subsampling in the $k$-space domain.
		Then, each image is processed by the trained network.
		We then aggregate these reconstructed images so that the motion corrected image can be obtained.}
	\vspace{-0.5cm}
	\label{fig:training_testing}
\end{figure*}

In the context of MR deep learning, a bootstrap aggregation can be represented as follows \cite{cha2020geometric}: %we use the following bagging estimator:
\begin{eqnarray}\label{eq:boost}
\widetilde\xb = \sum_{n=1}^N w_n G_\Theta(\Lc_n\widehat\xb),\quad 
\end{eqnarray}
where $\widehat\xb$ denotes the $k$-space data, $\widetilde \xb$ is the reconstructed images, $\Lc_n$ refers to the $n$-th $k$-space subsampling, $G_\Theta$ is a reconstruction network parameterized by $\Theta$ which reconstructs an image from a subsampled $k$-space data, and $w_n$ is the $n$-th weighting factor.
In other words, the final image is obtained as an aggregation of the individual reconstruction results from each $k$-space subsampling.
If the full $k$-space data is available, the benefit of bootstrap aggregation may seem unclear.
However, in our prior work \cite{cha2020geometric}, we demonstrated that instead of using just a single stronger deep learner, bootstrap aggregation can still provide high quality image reconstruction in the compressed sensing MRI.

One of the most important contributions of this work is to reveal that the benefit of bootstrap aggregation in \eqref{eq:boost} can be emphasized more in the context of motion artifact removal, since the sparse outlier model in \eqref{eq:motion} leads to the following key observation:
\begin{align}\label{eq:assume}
\Lc_n \widehat\xb \simeq \Lc_n \widehat\xb_e ,
\end{align}
for some sampling instance $\Lc_n$.
This implies that when using 1D subsampling along the phase encoding direction, the subsampling operation $\Lc_n$ can remove many sparse outliers of
$\widehat\xb_e$ in \eqref{eq:motion} so that it becomes similar to the subsampled $k$-space data from clean image.
Accordingly, it reduces the contribution of motion artifacts, so that the resulting bootstrap aggregation estimate in \eqref{eq:boost} may become much closer to the artifact-free image.
However, our assumption in \eqref{eq:assume} is only valid in probabilistic sense, since there may be the remaining artifact corrupted $k$-space
data after the subsampling. Therefore,  if our neural network is trained in a supervised manner using the clean data set,
there is a potential to introduce the bias.  In fact, as will be shown later in the experimental results, the supervised learning framework
introduces the blurring artifacts in the final reconstruction.
Therefore, we are interested in using an  unsupervised learning framework for the reconstruction from the subsampled $k$-space data,
and the optimal transport driven cycleGAN \cite{sim2019optimal,oh2020unpaired,cha2020unpaired} is nicely fit to this,  since it can exploit the deterministic subsampling
patterns using only a single generator.

%More specifically, thanks to the subsampling in the bootstrap aggregation that can reduce the contribution of the
%sparse $k$-space outliers from motion artifacts, the bootstrap aggregation scheme in \eqref{eq:boost} may
%be ideally suitable for unsupervised motion artifact removal.

The resulting motion artifact reduction method is illustrated in Fig. \ref{fig:training_testing}.
During the training phase in Fig. \ref{fig:training_testing}(a), the neural network is trained for accelerated MRI reconstruction using artifact-free data so that it can reconstruct a high quality image from any aliased image from randomly undersampled $k$-space data.
%We employ the cycleGAN structure for accelerated MRI reconstruction, which was proposed in \cite{sim2019optimal,oh2020unpaired}, since it shows better reconstruction results than supervised learning in terms of blurring and texture refinement.
The comparison between the results of cycleGAN and supervised learning is discussed in Section \ref{sec:discussion}.

In particular, in the upper branch of cycleGAN, the clean images are first converted into the Fourier spectrum, which are then downsampled by $k$-space subsampling, so that downsampled images are generated by inverse Fourier transform.
These downsampled images become the input of the network.
In the lower branch, a downsampled image that is different from the upper branch image first pass through the network and then the output is downsampled in the Fourier domain.
Furthermore, the reconstructed clean image and the real clean image become inputs of the discriminator so that the discriminator distinguishes them as real or fake images.
Therefore, the network learns how to reconstruct downsampled images to realistic fully sampled images in an unsupervised manner.

On the other hand, images with motion artifacts are used at the test phase as illustrated in Fig. \ref{fig:training_testing}(b).
A motion artifact image is first converted to a Fourier domain data, and then several random subsampling are applied to obtain multiple aliased images.
%An artifact image is downsampled with $N$ different ramdom sampling masks, so $N$ downsampled images are generated and become the input of the trained network.
Subsampling of $k$-space can delete some of $k$-space outliers with the phase error due to the motion.
Therefore, when the downsampled artifact images are reconstructed by the trained network, it is possible to obtain images with reduced motion artifact since the network was trained to reconstruct clean fully sampled images.
Then the neural network output of each aliased image is aggregated to obtain the final reconstruction.

Another important advantage of our method is that our method can also be used even when the $k$-space data is not available. Of course when the $k$-space data is available, the subsampling is applied directly to the real $k$-space data. However, in many practical applications,
the access to the $k$-space data is limited, so our image domain approach is very practical.

%At the test phase in Fig.~\ref{fig:training_testing}(b),  a motion-artifact image is first converted
%to a Fourier domain data, and then multiple randomly subsampling are applied to obtain multiple aliased images.
%Then each aliased image is used as input to the neural network, the output of which is aggregated to obtain the final reconstruction.
%Note that our method  can be also used if the $k$-space data is  available.
%In this case, the subsampling is directly applied to the real $k$-space data rather than to the synthesized $k$-space data from image data.

%When our model is trained, we use only clean images as shown in Fig. \ref{fig:training_testing}(a).
%

%Furthermore, to alleviate remaining motion artifact, each reconstructed images are averaged and the final motion corrected image is acquired. We use $N=10$ in our experiments.

One may think that subsampling procedure introduces severe artifact images in the reconstruction so that the overall aggregation does not change the image quality.
However, 
%one of the important key observations we have made is that once the neural network was trained using the artifact-free images as in \ref{fig:training_testing}(a), the neural network is biased to generate artifact-free images even from a few corrupted $k$-space data, so the overall aggregation leads to a significant improvement in image quality.
%%one may think that a specific choice of subsampling pattern $\Lc_n$ that removes the sparse outlier may work better than bootstrap aggregation.
%%However,
%Furthermore, 
the following proposition shows that although the estimation error for each individual subsampling pattern may be worse on average, the aggregated estimation always reduces the errors, which justifies the use of aggregation after the bootstrap subsampling.

\begin{proposition}\label{prp:PE_bagging}
Let $\sum_{n=1}^N w_n=1, w_n\geq 0$ and $\xb^*$ denotes the true image.
Then, we have
\begin{align*}
\sum_{n=1}^N w_n \|\xb^* - G_\Theta(\Lc_n\widehat \xb)\|^2 \geq  \|\xb^* -\sum_{n=1}^N w_n G_\Theta(\Lc_n\widehat\xb) \|^2 .
\end{align*}
\end{proposition}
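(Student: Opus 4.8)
The plan is to recognize that the inequality is precisely \emph{Jensen's inequality} for the convex map $\zb \mapsto \|\xb^* - \zb\|^2$, evaluated at the convex combination determined by the weights $\{w_n\}$. Since $w_n \geq 0$ and $\sum_{n=1}^N w_n = 1$, the aggregated reconstruction $\widetilde\xb = \sum_{n=1}^N w_n G_\Theta(\Lc_n\widehat\xb)$ from \eqref{eq:boost} is a genuine convex combination of the individual outputs, and squared error is convex, so the statement follows at once. I would nevertheless give a short self-contained proof via the bias--variance decomposition, since it also makes the size of the gap explicit and thereby reinforces the paper's message that aggregation strictly helps unless all subsampled reconstructions already agree.

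Concretely, I abbreviate $g_n := G_\Theta(\Lc_n\widehat\xb)$ and $\widetilde\xb := \sum_{n=1}^N w_n g_n$, and split each error as $\xb^* - g_n = (\xb^* - \widetilde\xb) + (\widetilde\xb - g_n)$. Expanding the squared norm gives $\|\xb^* - g_n\|^2 = \|\xb^* - \widetilde\xb\|^2 + 2\,\mathrm{Re}\langle \xb^* - \widetilde\xb,\, \widetilde\xb - g_n\rangle + \|\widetilde\xb - g_n\|^2$. Taking the $w_n$-weighted sum and using $\sum_n w_n = 1$, the first term collapses to $\|\xb^* - \widetilde\xb\|^2$, and the key cancellation is that the cross term vanishes: because the weights are real scalars, $\sum_{n=1}^N w_n(\widetilde\xb - g_n) = \widetilde\xb - \sum_n w_n g_n = \zerob$, whence $\sum_n w_n\,\mathrm{Re}\langle \xb^* - \widetilde\xb,\, \widetilde\xb - g_n\rangle = 0$.

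What remains is the exact identity
$$\sum_{n=1}^N w_n \|\xb^* - g_n\|^2 = \|\xb^* - \widetilde\xb\|^2 + \sum_{n=1}^N w_n \|\widetilde\xb - g_n\|^2,$$
and since the residual (variance-like) term $\sum_n w_n\|\widetilde\xb - g_n\|^2$ is a nonnegative combination of squared norms, the claimed inequality follows, with equality precisely when all reconstructions $g_n$ coincide. There is essentially no hard step here; the only point requiring a little care is that MR images are complex-valued, so I would use the real-part inner product $\mathrm{Re}\langle\cdot,\cdot\rangle$ throughout. Since the $w_n$ are real, this does not disturb the vanishing of the cross term, and the argument goes through verbatim.
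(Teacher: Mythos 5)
Your proof is correct, and it takes a slightly different route from the paper's. The paper expands the weighted sum of squared errors and then invokes Jensen's inequality once, at the quadratic term, to bound $\sum_n w_n \|G_\Theta(\Lc_n\widehat\xb)\|^2 \geq \bigl\|\sum_n w_n G_\Theta(\Lc_n\widehat\xb)\bigr\|^2$, after which the terms recombine into the right-hand side. You instead prove the exact bias--variance identity
\begin{align*}
\sum_{n=1}^N w_n \|\xb^* - g_n\|^2 = \Bigl\|\xb^* - \widetilde\xb\Bigr\|^2 + \sum_{n=1}^N w_n \|\widetilde\xb - g_n\|^2,
\end{align*}
which is strictly more informative: it exhibits the gap between the two sides as the weighted spread $\sum_n w_n\|\widetilde\xb - g_n\|^2$ of the individual reconstructions around their aggregate, quantifying how much aggregation helps and showing it is an equality only when the reconstructions agree. (The two arguments are equivalent in content --- the paper's Jensen step is exactly the statement that your variance term is nonnegative --- but yours makes this explicit rather than citing convexity.) Your care with complex-valued data via $\mathrm{Re}\langle\cdot,\cdot\rangle$ is also a genuine, if small, improvement: the paper's proof writes $\xb^{*\top}$, implicitly treating images as real vectors, which is consistent with the magnitude-only data sets used but less general than your version. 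One tiny refinement to your equality condition: equality holds precisely when $g_n = \widetilde\xb$ for every $n$ with $w_n > 0$; reconstructions carrying zero weight are unconstrained.
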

\begin{proof}
Using $\sum_{n=1}^N w_n=1, w_n\geq 0$, we have
\begin{align*}
&\sum_{n=1}^N w_n \|\xb^* - G_\Theta(\Lc_n\widehat \xb)\|^2  \\
=& \|\xb^*\|^2 - 2\xb^{*\top}  \sum_n w_nG_\Theta(\Lc_n\widehat \xb) + \sum_n w_n \|G_\Theta(\Lc_n\widehat \xb) \|^2 \\
 \geq&   \|\xb^*\|^2 - 2\xb^{*\top}  \sum_k w_n G_\Theta(\Lc_n\widehat \xb) + \|\sum_n w_n G_\Theta(\Lc_n\widehat \xb)  \|^2\\
 =&\|\xb^* -\sum_n w_n G_\Theta(\Lc_n\widehat \xb) \|^2 , 
\end{align*}
where we use the Jensen's inequality for the inequality.
\end{proof}

\section{Method}\label{sec:method}
\subsection{Experimental Data Sets}
We use two data sets for our experiments.
The first data set is the human connectome project (HCP) data which contains MR images of the human brain.
The Siemens 3T system with 3D spin echo imaging was used to obtain the HCP data set.
The imaging parameters for the acquisition of the HCP data are as follows: echo train duration = 1105, TR = 3200 ms, TE = 565 ms, matrix size = $320\times320$, and voxel size = 0.7 mm $\times$ 0.7 mm $\times$ 0.7 mm.
The HCP data set is composed of only magnitude images.
This was used for quantitative verification of the performance of our method using simulated motions.
To acquire MR images with simulated motion artifact, $k$-space data is synthesized by taking the Fourier transform of the magnitude image.
We use 150 MR volumes for training and remaining 40 volumes for testing.
Each volume contains 20 slices of brain images, so training and test data contain 3000 and 800 MR slices, respectively.

The second data set is Gd-EOB-DTPA-enhanced magnetic resonance imaging of the liver collected from Chungnam National University Hospital.
% The institutional review board approved this study (IRB No. 2019-09-021), and the written informed consent was waived.
% A study coordinator searched the electronic database for liver MR examinations using Gd-EOB-DTPA and found examinations with transient motion artifacts in the arterial phase between January and December 2018.
% MR examinations were performed using a 3T Philips Achieva MR system (Philips HealthCare, Best, The Netherlands) with a 32-channel torso phased-array coil.
% 0.1 mL/kg of gadoxetic acid (Primovist, Bayer Schering Pharma, Berlin, Germany) was administered intravenously at a rate of 1 mL/sec with subsequent saline flushing.
Dynamic imaging, including the hepatic arterial phase, the portal phase, a 3-minutes transitional phase, and a 20-minutes hepatobiliary phase, was obtained using the fat-suppressed, breath-holding, T1-weighted 3D gradient recalled echo sequence using the following parameters: TR = 3.1 ms, TE = 1.5 ms, flip angle = 10$^\circ$, field of view = $256\times256$ mm$^2$, slice thickness/intersection gap = 2/0 mm, matrix =$320\times192$, number of excitation = 1, and acquisition time = 16.6 sec.
Since the data was provided in DICOM format, this data set also contains only magnitude images.
This liver MRI data set includes clean data from 23 patients, and artifact data from 20 patients.
Due to the administration of Gd-EOB-DTPA contrast agent, the motion artifact in liver MRI mainly occurs in the arterial phase.
Among 23 clean volumes, 18 volumes (3097 slices) were used to train our network, and the 20 patient artifact data (3412 slices) are used for testing to verify that our method can be applied to real motion artifact data.
In addition, we also generated synthetic motion data for quantitative evaluation using the other 5 clean volumes (888 slices) that were used as test data set for another simulation study.
Also, all images in this data set are cropped to the size of $384\times512$ to remove a margin of images.

%Fig \ref{fig:data} shows the examples of liver MR images with or without real motion artifact.

\subsection{Network Architecture}
Fig. \ref{fig:training_testing} shows the schematic diagram of the proposed method, where we use the subsampling factor $N=15$ at the acceleration factor of $R=3$.
The network backbone architecture for our reconstruction network is depicted in Fig. \ref{fig:network}.
Our network architecture is based on U-Net \cite{ronneberger2015u} and it consists of convolution layers, instance normalization \cite{ulyanov2016instance}, activation layers and pooling layers.
Furthermore, we employ adaptive residual learning \cite{cha2020geometric} to improve reconstruction performance.
Specifically, the network's original output and the residual output are combined by the channel concatenation, from which the final output is generated using the last $1\times1$ convolution layer.
Also, we use the patchGAN discriminator \cite{zhu2017unpaired} similar to \cite{oh2020unpaired}.

\begin{figure}[!h]
	\centerline{\includegraphics[width=0.99\linewidth]{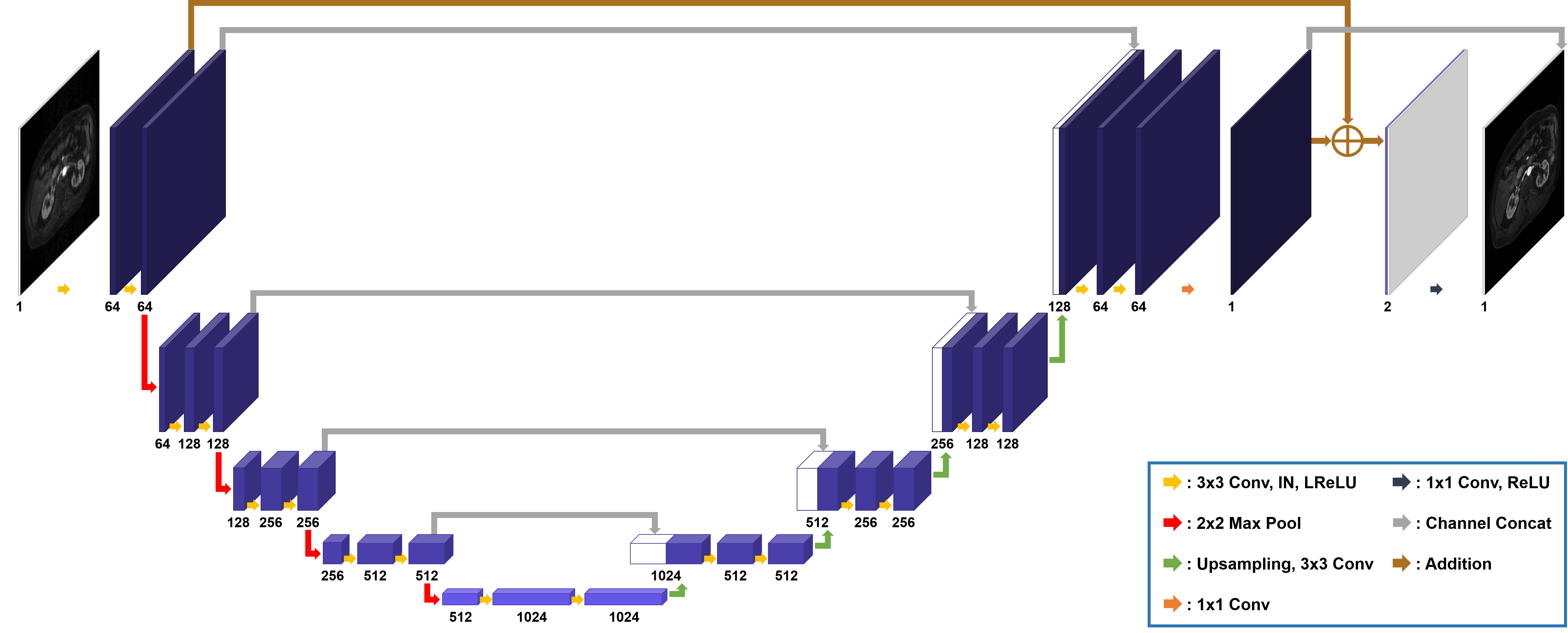}}
	\caption{Generator architecture.
		We use U-Net architecture and adaptive residual learning to reconstruct fully sampled images from downsampled images.
		Numbers below the blocks indicate the number of channels of each block.}
	\vspace{-0.5cm}
	\label{fig:network}
\end{figure}

\begin{figure*}[!t]
	\centerline{\includegraphics[width=0.7\linewidth]{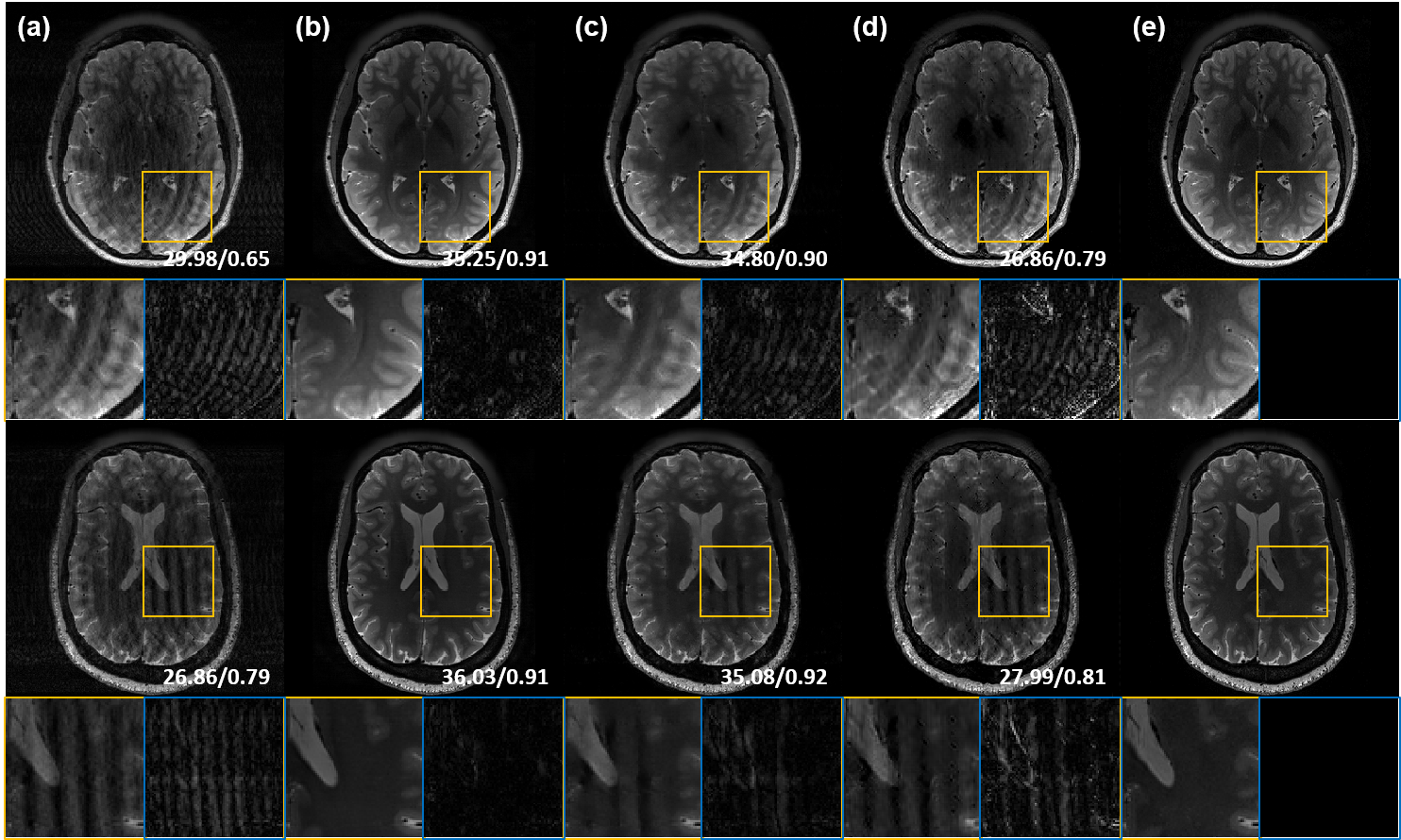}}
	\caption{Motion artifact correction results of brain data using various methods with simulated motion artifact:
		(a) Artifact images, (b) ours, (c) MARC, (d) Cycle-MedGAN V2.0, and (e) the ground-truth.
		The window level of images is adjusted for better visualization.
		PSNR and SSIM values of each image are shown in the corner of images.
		The difference images are amplified by a factor of three.}
	\vspace{-0.5cm}
	\label{fig:simulation_brain}
\end{figure*}

\subsection{Comparative Algorithms}
To verify the performance of our model, we use two state-of-the-art methods for MR motion artifact correction.
The first method is MARC \cite{tamada2020motion}, which is proposed to reduce the motion artifact for liver MRI using a convolutional neural network.
Because \cite{tamada2020motion} trained their model with simulated data, we also train MARC with simulated data, and test this model on both simulated and real motion artifact data.
%even though \cite{tamada2020motion} use multi-phase images, 
For comparison, we only use arterial phase images to train MARC, since our method is also trained with arterial phase images.

Next, Cycle-MedGAN V2.0 \cite{armanious2020unsupervised} is employed as a comparison method.
Cycle-MedGAN V2.0 is an unpaired learning method which is based on cycleGAN \cite{zhu2017unpaired} for motion artifact correction and can therefore be trained with both real and simulated data.
Therefore we train and test Cycle-MedGAN V2.0 with both of real and simulated motion artifact data.
In \cite{armanious2020unsupervised}, they employed the pre-trained discriminator of a bidirectional GAN \cite{donahue2016adversarial} as a feature extractor.
However, we utilize VGG16 network \cite{simonyan2014very} pre-trained on the ImageNet data set because we cannot use a bidirectional GAN discriminator that is pre-trained on CT data set.
Since Cycle-MedGAN V2.0 requires real artifact data set, we collected additional 3096 real artifact images to train the cycleGAN network \cite{armanious2020unsupervised}.

To quantitatively assess the performance of various algorithms, we use the peak signal-to-noise ratio (PSNR) and the structural similarity index metric (SSIM) as quantitative metrics.
In addition, we conduct a clinical evaluation of the results of various methods using real motion artifact data.

\subsection{Artifact Simulation}
For quantitative evaluation, we use simulated motion artifact data.
For the generation of rigid motion artifacts, the phase error along the phase encoding direction can be formulated as \eqref{eq:motion} with
%\begin{eqnarray}\label{eq:simulated motion}
%Y=\Fc Xe^{-j\Phi(k)}, & -\pi<k<\pi
%\end{eqnarray}
%where $X$ is an magnitude image without motion, $\Fc$ is the 2D Fourier transform, $\Phi(k)$ is a phase error at $k$-space line $k$, and $Y$ is a $k$-space of MR image with simulated motion artifact.
%$\Phi(k)$ is also formulated as:
\begin{eqnarray}\label{eq:random simulated motion}
\Phi(k_y) = \begin{cases}
k_y\Delta_k, & |k_y|>k_0 \\
0, & \text{otherwise,}
\end{cases}
\end{eqnarray}
where $\Delta_k$ ($-37<\Delta_k<37$) is the degree of motion at $k$-space line $k$, and $k_0$ is delay time of the phase error due to the centric $k$-space filling \cite{tamada2020motion}.
Accordingly, to simulate random rigid motion for brain MRI data, we use \eqref{eq:random simulated motion} where $k_0$ is fixed to $\pi/10$, and $\Delta_k$ is randomly selected at each $k$-space line.

On the other hand, it is shown that the phase error caused by the breathing motion appears in $k$-space as a form of the sinusoidal function \cite{tamada2020motion}:
\begin{eqnarray}\label{eq:periodic simulated motion}
\Phi(k_y) = \begin{cases}
k_y\Delta \sin(\alpha k_y+\beta), & |k_y|>k_0 \\
0,& \text{otherwise,}
\end{cases}
\end{eqnarray}
where $\alpha$ ($0.1<\alpha<5$) and $\beta$ ($0<\beta<\pi/4$) are constants that determine the period and the phase shift, respectively, $\Delta$ ($0<\Delta<37$) is the number of pixels, which corresponds to 2.5$\sim$2.6 cm.
Hence, \eqref{eq:periodic simulated motion} is used for periodic motion artifact generation in liver MRI data.
We also set the value of $k_0$ as $\pi/10$ , and the values of other constants are chosen randomly in a certain range to simulate the realistic respiratory motion artifact.

\subsection{Training Details}
For bootstrap subsampling, we use a 1D Gaussian random sampling strategy along the phase encoding direction.
In the 1D Gaussian random sampling, lines near the center of $k$-space are sampled more than lines at the periphery.
We also set the acceleration factor $R$ to 3 or 4, and the autocalibration signal region contains 6 \% or 11 \% central $k$-space lines in the experiments using the brain and liver data set, respectively.

We divide each MR image by the standard deviation of its pixel values to normalize data.
To train our network, we use the Adam optimizer with momentum parameters $\beta_1=0.5$, $\beta_2=0.999$.
Also, we use $L_1$ loss with the batch size of 1.
The initial learning rate is set to $10^{-4}$, and reduced linearly to zero after 100 epochs.
Our model is trained for 200 epochs and implemented by TensorFlow.

\section{Experimental Results}\label{sec:result}
\begin{figure*}[!h]
	\centerline{\includegraphics[width=0.8\linewidth]{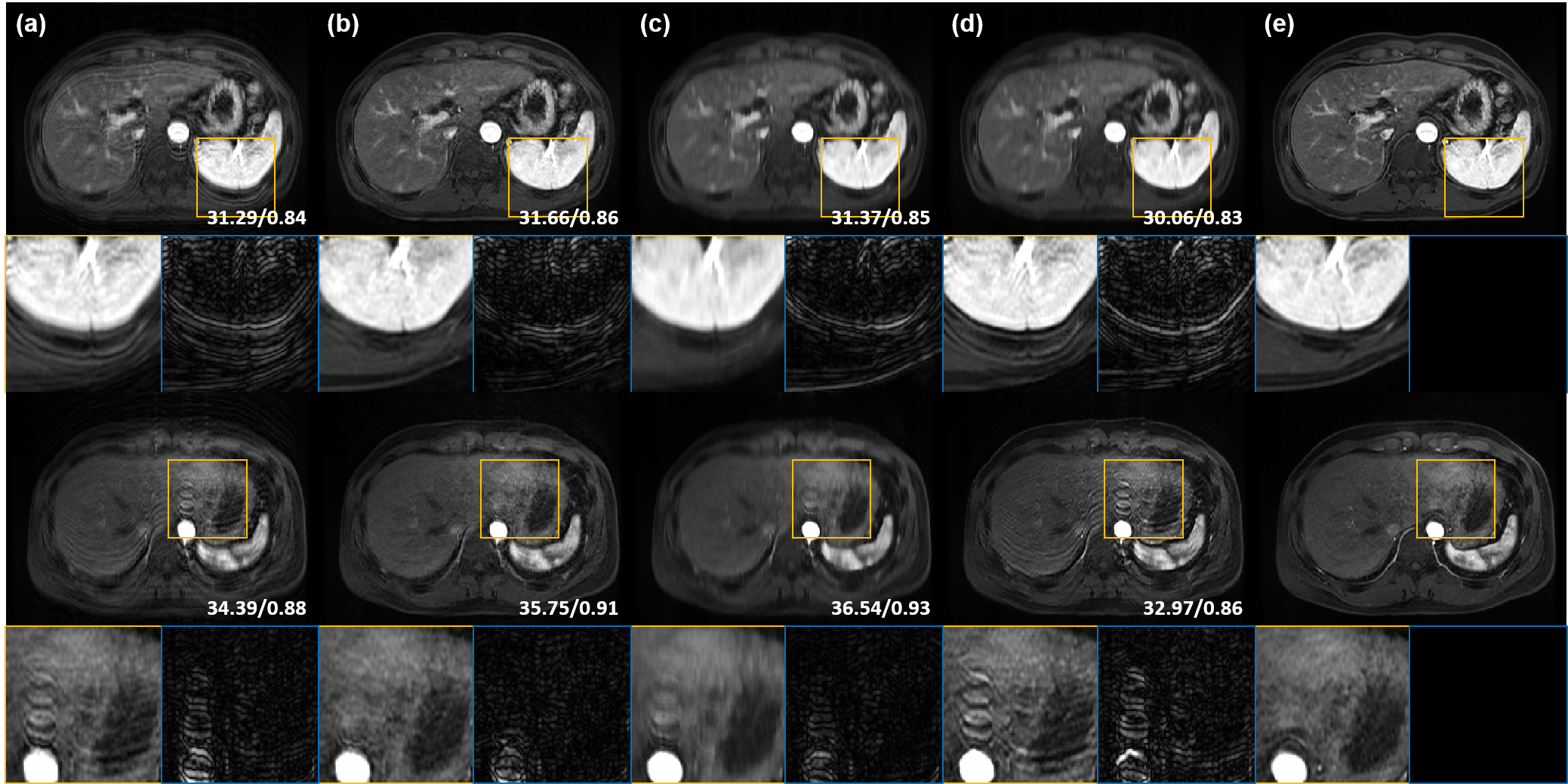}}
	\caption{Motion artifact correction results of liver data using various methods with simulated motion artifact: 
		(a) Artifact images, (b) ours, (c) MARC, (d) Cycle-MedGAN V2.0, and (e) the ground-truth.
		The window level of images is adjusted for better visualization.
		PSNR and SSIM values of each image are shown in the corner of images.
		The difference images are amplified by a factor of three.}
	\vspace{-0.5cm}
	\label{fig:simulation_liver}
\end{figure*}

\subsection{Experiments with Simulated Data}
Fig. \ref{fig:simulation_brain} shows the motion artifact correction results using comparison methods and our method.
As shown in Fig. \ref{fig:simulation_brain}(a), simulated random motion artifacts appear in the input images.
MARC \cite{tamada2020motion} reduces the motion artifacts in MR images and improves the quantitative metric values (Fig. \ref{fig:simulation_brain}(c)).
However, some severe artifacts still remain in the output images of MARC.
Next, the motion artifact correction fails when Cycle-MedGAN V2.0 \cite{armanious2020unsupervised} is applied, as shown in Fig. \ref{fig:simulation_brain}(d). 
Motion artifacts do not disappear and other artifacts such as black dots appear in the output of Cycle-MedGAN V2.0.
In addition, the PSNR values of the results using \cite{armanious2020unsupervised} are lower than PSNR values of the input images.
On the other hand, our method successfully corrects motion artifacts in MR images (Fig. \ref{fig:simulation_brain}(b)).
The motion artifacts that are not corrected by MARC are also reduced when using our method.
Furthermore, the proposed method shows similar quantitative metric values as MARC.

Next, we attempted to correct simulated periodic motion artifacts in liver MR images.
Fig. \ref{fig:simulation_liver} shows experimental results using various methods.
MARC reduces periodic motion artifacts and shows higher quantitative metric values than input images.
However, the output images from MARC are excessively blurred, making it difficult to recognize some anatomical structures or blood vessels as shown in Fig. \ref{fig:simulation_liver}(c).
Cycle-MedGAN V2.0 makes edges clear, but emphasizes motion artifacts together (Fig. \ref{fig:simulation_liver}(d)).
The emphasis of artifacts leads to a deterioration in the quantitative results.
Meanwhile, our method in Fig. \ref{fig:simulation_liver}(b) reduces motion artifacts in liver MR images and reconstructs anatomical structures and details.
Our method also shows comparable quantitative metric values with MARC which is based on supervised learning with paired data.

\begin{figure*}[!h]
	\centerline{\includegraphics[width=0.8\linewidth]{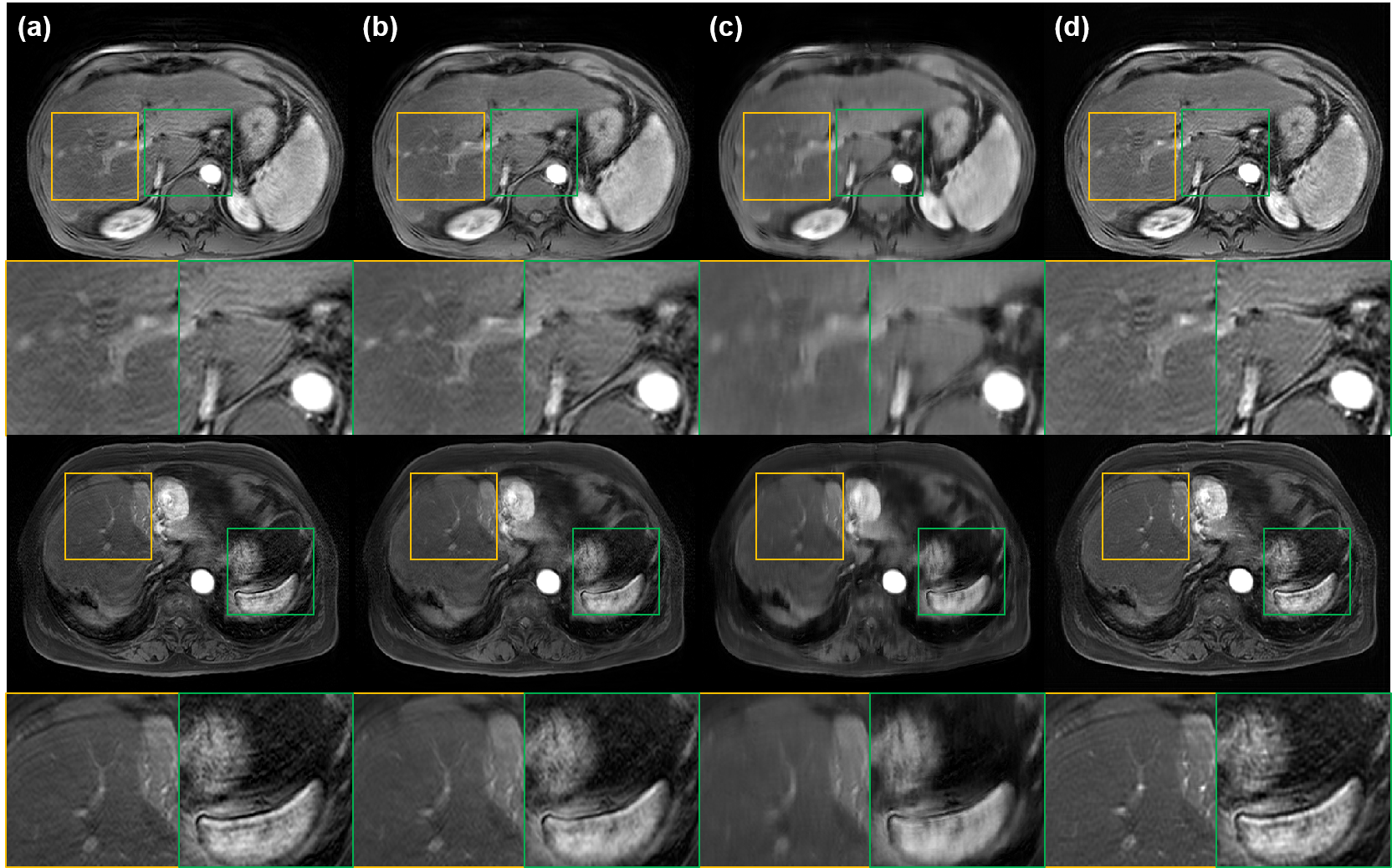}}
	\caption{Motion artifact correction results of liver data with real motion artifact using various methods:
		(a) Artifact images, (b) ours, (c) MARC, and (d) Cycle-MedGAN V2.0.
		The window level of images is adjusted for better visualization.}
	\vspace{-0.5cm}
	\label{fig:real_liver}
\end{figure*}

Table \ref{tbl:quantitative metric} shows the average quantitative metric values of comparison and our methods.
In the simulation experiments with brain data, MARC shows the highest metric values because it is trained through supervised learning with paired data.
However, the quantitative metric values of our method are similar to those of MARC.
Furthermore, our method also shows comparable quantitative results with MARC in the experiments using liver data.

To sum up, we confirm that our method shows similar level of quantitative metric values with MARC, and recovers fine details better than other methods when using simulated motion artifact data.

\begin{table}[!hbt]
	\centering
	\caption{Quantitative comparison of various methods for simulated data. 
		The values in the table are average values for the whole test data.}
	\label{tbl:quantitative metric}
	\resizebox{0.4\textwidth}{!}{
		\begin{tabular}{c | c | c | c} 
			\toprule
			\multicolumn{2}{c|}{ } 																& PSNR (dB)			& SSIM					\\ \midrule\midrule
			\multirow{4}*{Brain}		& Input													& 30.0320			& 0.6215				\\
			\ 							& Proposed												& 34.1347			& 0.9078				\\
			\ 							& MARC \cite{tamada2020motion}							& 34.5972			& 0.9149				\\
			\ 							& Cycle-MedGAN V2.0 \cite{armanious2020unsupervised}	& 26.0323			& 0.8050				\\
			 \midrule
			\multirow{4}*{Liver}		& Input													& 31.7857			& 0.8188				\\
			\							& Proposed												& 31.8611			& 0.8479				\\
			\							& MARC \cite{tamada2020motion}							& 32.2289			& 0.8740				\\
			\							& Cycle-MedGAN V2.0 \cite{armanious2020unsupervised}	& 29.5639			& 0.8033				\\
			 \bottomrule
		\end{tabular}
	}
\end{table}

% \begin{figure}[!h]
% 	\centerline{\includegraphics[width=\linewidth]{}}
% 	\caption{The motion artifact correction results of brain data with simulated motion artifact using the proposed method with frequency or phase encoding direction subsampling:
% 		(a) Artifact images, (b) phase encoding directional subsampling, (c) frequency encoding directional subsampling, and (d) the ground-truth.
% 		The window level of images is adjusted for better visualization.
% 		PSNR and SSIM values of each image are shown in the corner of images.
% 		The difference images are amplified by a factor of three.}
% 	%\vspace{-0.5cm}
% 	\label{fig:brain_freq}
% \end{figure}

\subsection{Experiments with Real Data}
Next, we demonstrate our method with real motion artifact data.
The first column of Fig. \ref{fig:real_liver} shows liver MR images with real motion artifact.
When MARC \cite{tamada2020motion} is used to correct real motion artifacts, blurred images are generated (Fig. \ref{fig:real_liver}(c)).
Cycle-MedGAN V2.0 \cite{armanious2020unsupervised} could not catch the difference between two domains of clean images and motion corrupted images.
Accordingly, as shown in Fig. \ref{fig:real_liver}(d), the motion artifact still remains in the output of Cycle-MedGAN V2.0.
On the other hand, our method corrects motion artifact and preserves high frequency details of images as shown in Fig. \ref{fig:real_liver}(b).

\subsection{Clinical Evaluation}
To assess the performance of our model in reducing the arterial phase artifacts of liver MRI acquired during unsuccessful breath-holding, image analyses were performed by a radiologist with 11 years of experience in abdominal MR imaging.
The data set consisted of original and artifact-reduced images using MARC, Cycle-MedGAN V2.0, and our  model, respectively.
All data were subjected to qualitative assessments of the image quality.
To compare the performance of MARC, Cycle-MedGAN V2.0, and our proposed algorithm to reduce artifacts in the hepatic arterial phase, we rated the image quality using a 5-point visual scoring system: 1 = excellent image quality without artifacts; 2 = mild artifacts with satisfactory diagnostic confidence; 3 = moderate artifacts with limited diagnostic confidence; 4 = poor image quality and severe artifacts; 5 = non-diagnostic and marked artifacts with impaired image quality.
The comparison of lesion conspicuity was evaluated between the original and artifact-reduced images and rated using a 4-point conspicuity score: 1 = much better in artifact-reduced images compared to original images, 2= better than in artifact-reduced, 3 = same in both images, and 4 = better in the original than in artifact-reduced images.
We also evaluated the image blurring according the following 4-point scoring system: 1 = no blurring; 2 = mild blurring; 3 = moderate blurring, 4 = severe blurring.

\begin{table}[!hbt]
	\centering
	\caption{Quantitative clinical evaluation results of various methods for in vivo liver data with real motion artifact.}
	\label{tbl:clinical_evaluation}
	\resizebox{0.5\textwidth}{!}{
		\begin{tabular}{c | c | c | c} 
			\toprule
			\                                                       & Artifact              & Blurring              & Lesion conspicuity    \\ \midrule\midrule
			\ Input (original)                                      & 3.20 $\pm$ 1.28       & 1.15 $\pm$ 0.36       & 2.05 $\pm$ 1.05       \\
			\ MARC \cite{tamada2020motion}                          & 1.75 $\pm$ 0.96       & 2.95 $\pm$ 0.22       & 2.40 $\pm$ 1.14       \\
			\ Cycle-MedGAN V2.0 \cite{armanious2020unsupervised}    & 2.35 $\pm$ 1.08       & 1.15 $\pm$ 0.36       & 2.05 $\pm$ 0.99       \\
			\ Proposed                                              & 1.95 $\pm$ 0.94       & 1.20 $\pm$ 0.41       & 1.85 $\pm$ 0.98       \\
			\bottomrule
		\end{tabular}
	}
\end{table}

The results are summarized in Table \ref{tbl:clinical_evaluation}.
The image artifact was significantly reduced by our algorithm, which can be observed from an average motion artifact score $\pm$ standard deviation of 1.95 $\pm$ 0.94 compared to the original MR images of 3.20 $\pm$ 1.28 (p $<$ 0.05).
Compared to the original images, the image blurring significantly increased by MARC seen from an average score $\pm$ standard deviation of 2.95 $\pm$ 0.22 compared to the original MR images of 1.15 $\pm$ 0.36 (p $<$ 0.05).
Our model does not show any increased image blurring.
The performance of the lesion conspicuity shows no significant differences between all models.
However, the hepatic lesion was visualized more clearly in our proposed model (Fig. \ref{fig:clinical_evalutation}).

\begin{figure}[!h]
	\centerline{\includegraphics[width=0.7\linewidth]{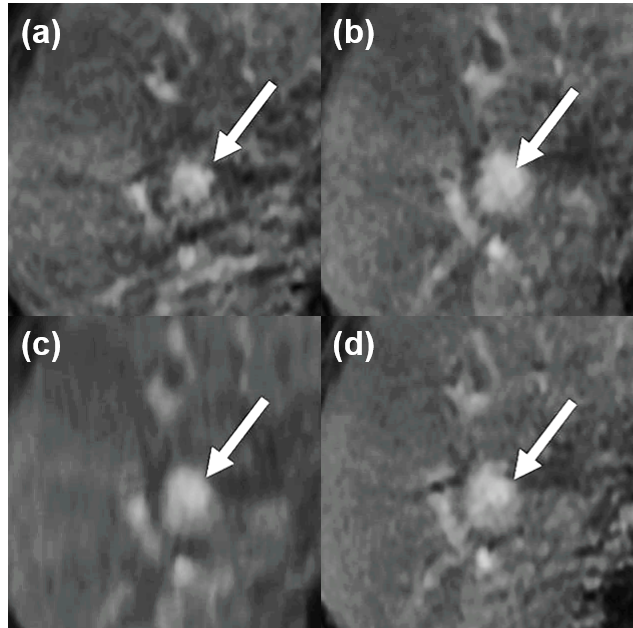}}
	\caption{Lesion conspicuity (a) in the original image, and after application of (b) our model, (c) MARC,  and (d) Cycle-MedGAN V2.0. Axial T1-weighted MR images in patients diagnosed with HCC show improved lesion conspicuity, reduced artifact without significant imaging blurring following our model application compared to the other images.}
	\vspace{-0.5cm}
	\label{fig:clinical_evalutation}
\end{figure}

\section{Discussion}\label{sec:discussion}
\subsection{Effects of Subsampling Direction}
To verify our claim that the motion causes the phase error along the phase encoding direction and our subsampling reduces its effects, we apply our method for experiments with frequency encoding directional subsampling masks.
In these experiments, our network is trained and tested with frequency encoding directional subsampling masks.
% Fig. \ref{fig:brain_freq} shows the experimental results with simulated brain data.
% As shown in Fig. \ref{fig:brain_freq}, our method fails to remove simulated motion artifacts when the samples in $k$-space are subsampled along the frequency encoding direction.
Fig. \ref{fig:liver_freq} shows the experimental results with in vivo liver data.
As shown in Fig. \ref{fig:liver_freq}, our method fails to remove motion artifacts when the samples in $k$-space are subsampled along the frequency encoding direction.
Only when the phase encoding directional subsampling is employed, it was possible to correct motion artifacts, which confirms our claim.
%Similar results are obtained using the real liver MRI data as shown in  Fig. \ref{fig:liver_freq}.
% Furthermore, these results are consistent even when using real motion artifact data (Fig. \ref{fig:liver_freq}).
Through these experiments, we verify that it is possible to reduce motion artifact using our method because the subsampling along the phase encoding direction can remove some samples with the phase error.

\begin{figure}[!hbt]
	\centerline{\includegraphics[width=\linewidth]{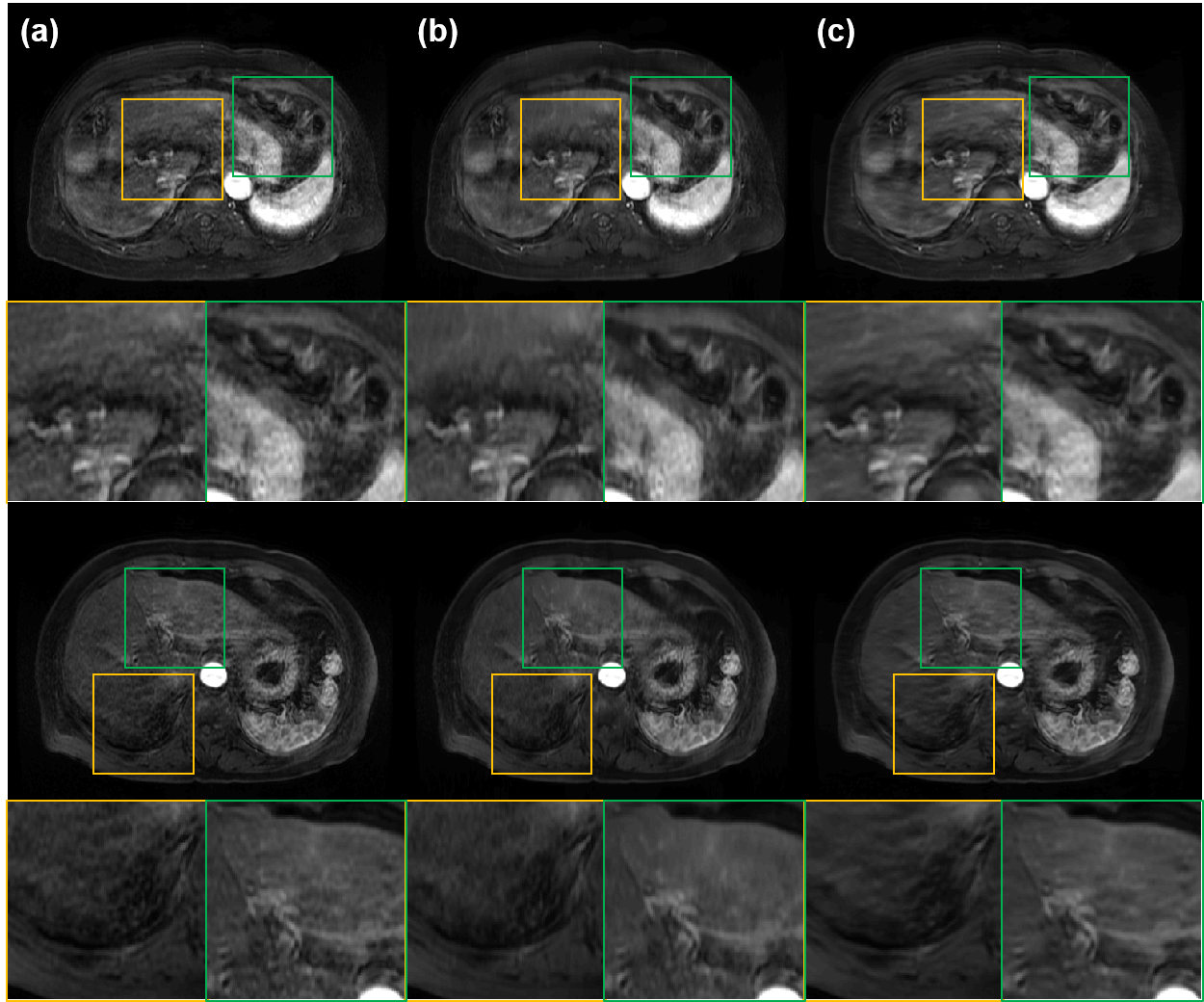}}
	\caption{Motion artifact correction results depending on subsampling directions for in vivo liver MRI data: 
		(a) Artifact images, and correction results by our method with (b) phase encoding directional, and (c) frequency encoding direction subsampling.
		The window level of images is adjusted for better visualization.}
% 		PSNR and SSIM values of each image are shown in the corner of images.
% 		The difference images are amplified by a factor of three.}
	\vspace{-0.5cm}
	\label{fig:liver_freq}
\end{figure}

\subsection{Effects of Unsupervised Learning}
Next we also verify that an unsupervised training using optimal transport driven cycleGAN is better than supervised learning in terms of blurring and texture refinement.
%Fig. \ref{fig:liver_cycle} shows the motion corrected results using unpaired and paired accelerated MRI reconstruction methods.
As shown in Fig. \ref{fig:liver_cycle}, the texture of the organs is better preserved when the unsupervised training  was used than the supervised
learning was used.
In particular, the motion corrected image become blurry when the supervised learning was used.
We believe that this is due to the potential bias in supervised training using the paired data set.
Therefore, we chose the unsupervised training using optimal transport driven cycleGAN although the paired aliased and clean image data are available.

\subsection{Comparison with Other Methods}
In our experiments, we confirmed that MARC \cite{tamada2020motion} shows high quantitative metric values, but generates blurry output.
Because MARC is a supervised method for motion artifact reduction and minimizes $L_1$ loss between clean images and motion artifact images, it shows the highest quantitative metric values.
However, experimental results and clinical evaluation showed that the output of MARC is extremely blurred and it is difficult to distinguish anatomical structures in images which are reconstructed by MARC.
On the other hand, Cycle-MedGAN V2.0 \cite{armanious2020unsupervised} sharpens the edges of images.
Nevertheless, the quantitative results of Cycle-MedGAN V2.0 is lower than the quantitative metrics of images with motion artifact because Cycle-MedGAN V2.0 also highlights motion artifacts.
This maybe because  Cycle-MedGAN V2.0 could not recognize the motion artifact as a difference between two domains.
Furthermore, because Cycle-MedGAN V2.0 is composed of two generators and two discriminators, it requires large amount of memory, long training time, and sensitive hyper-parameter setting.
Actually, we had to reduce the number of channels and layers of networks in Cycle-MedGAN V2.0 to make it even converge for experiments using liver MRI data.

Compared to other methods, our method achieved both of motion artifact correction and restoration of high frequency details.
Although our network is not directly trained using paired data set, it shows competitive quantitative metric compared to MARC.
Moreover, our method outperforms other methods in terms of qualitative results and clinical evaluation, and it also successfully removes real motion artifacts without sacrificing the image quality.

\begin{figure}[!hbt]
	\centerline{\includegraphics[width=0.75\linewidth]{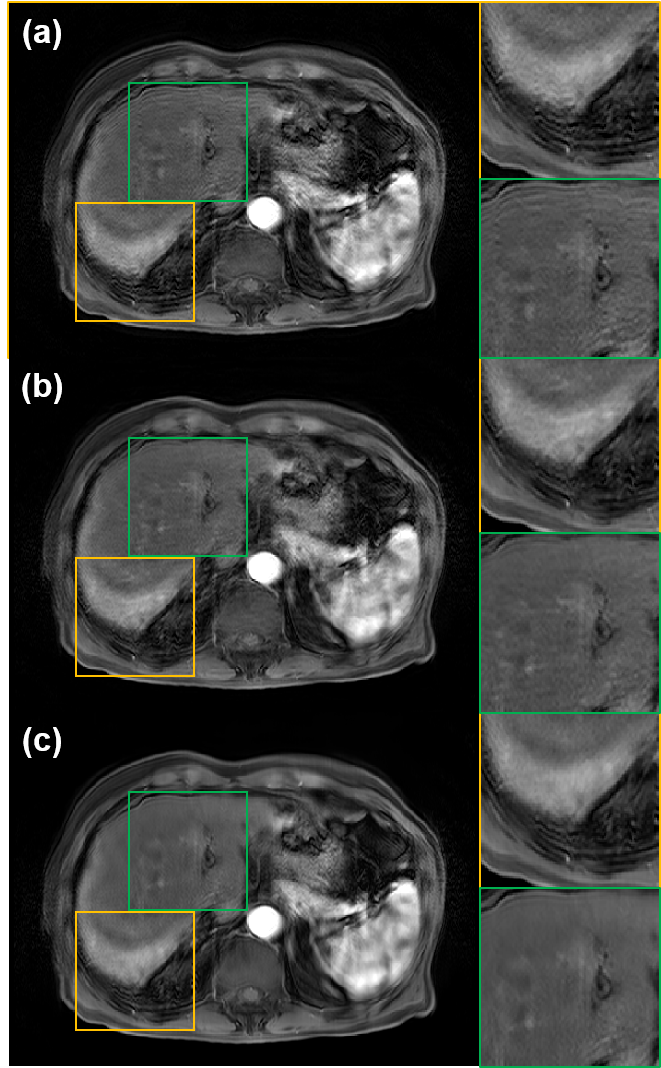}}
	\caption{Ablation study results. 
		(a) Artifact images, and correction results by our method with (b) unsupervised, and (c) supervised trainning.
		The window level of images is adjusted for better visualization.}
	\vspace{-0.5cm}
	\label{fig:liver_cycle}
\end{figure}

\section{Conclusion}\label{sec:conclusion}
In this paper, we proposed a novel MRI motion artifact correction algorithm using the subsampling of $k$-space data.
By converting motion artifact correction problem to a $k$-space outlier-rejecting bootstrap subsampling and aggregation approach for MR reconstruction, it was possible to remove simulated and real motion artefact in MR images.
Moreover, we demonstrated that our method outperforms other existing methods in terms of qualitative and clinical evaluation results.
%, and shows competitive quantitative results compared to a supervised method which is trained with paired data.
We believe that our method may be an important platform for MRI motion artifact correction when paired clean data do not exist.
Furthermore, this framework can be easily extended to other artifacts from MRI reconstruction or other medical modalities.

\section{Acknowledgement}
This work was supported by the National Research Foundation (NRF) of Korea grant NRF-2020R1A2B5B03001980.

%\clearpage
\bibliographystyle{IEEEtran}
\bibliography{ref,biblio_book}

\end{document}